\documentclass{article} 
\usepackage[preprint]{colm2026_conference}

\usepackage{microtype}
\usepackage{hyperref}
\usepackage{url}
\usepackage{booktabs}

\usepackage{hyperref}

\usepackage{amsmath}
\usepackage{amssymb}
\usepackage{mathtools}
\usepackage{amsthm}
\usepackage{enumitem}

\usepackage[T1]{fontenc}    
\usepackage{hyperref}       
\usepackage{url}            
\usepackage{booktabs}       
\usepackage{amsfonts}       
\usepackage{nicefrac}       
\usepackage{microtype}      
\usepackage{xcolor}         

\usepackage{framed}
\usepackage{listings}

\usepackage{algorithmic}
\usepackage{algorithm}
\usepackage{subcaption}
\usepackage{array}

\usepackage[capitalize,noabbrev]{cleveref}

\theoremstyle{plain}
\newtheorem{theorem}{Theorem}[section]

\newtheorem{lemma}[theorem]{Lemma}

\theoremstyle{definition}

\theoremstyle{remark}

\usepackage[
disable,
textsize=tiny]{todonotes}
\usepackage{xspace}

\newcommand{\todonev}[2][]{\todo[noinline,size=\scriptsize,color=pink!20!white,#1]{N: #2}\xspace}
\newcommand{\todoa}[2][]{\todo[noinline,size=\scriptsize,color=green!20!white,#1]{A: #2}\xspace}

\newcommand{\cD}{\mathcal D}
\newcommand{\cP}{\mathcal P}

\newcommand{\cV}{\mathcal V}

\newcommand{\sm}{\text{softmax}}
\newcommand{\attn}{\text{attn}}
\newcommand{\cattn}{\text{copyattn}}
\newcommand{\lnorm}{\text{layernorm}}
\newcommand{\mlp}{\text{MLP}}

\usepackage{lineno}

\definecolor{darkblue}{rgb}{0, 0, 0.5}
\hypersetup{colorlinks=true, citecolor=darkblue, linkcolor=darkblue, urlcolor=darkblue}

\title{
To See the Unseen: on the Generalization Ability of Transformers in Symbolic Reasoning

}

\author{Nevena Lazi\'c
\And
Liam Fowl
\And
Andr\'as Gy\"orgy 
\And
Csaba Szepesv\'ari
}

\begin{document}

\ifcolmsubmission
\linenumbers
\fi

\maketitle

\begin{abstract}
We investigate the ability of decoder-only transformer models to perform abstract symbolic reasoning; specifically solving propositional logic reasoning problems given in-context. 
Previous work demonstrated that models fail to generalize to problems involving variable names that were not observed during training, and 
it was shown that one reason behind this 
is the difficulty of copying (or generating) unseen tokens. 
We show both theoretically and empirically that a particular representational collapse also has a crucial role: the \emph{unembeddings} (last-layer weights) of unseen tokens collapse to nearly the same vector during training. The collapse makes distinguishing multiple unseen variables difficult for the model (especially when the embedding and unembedding parameters are shared), and provides a mechanistic explanation for the effectiveness of existing heuristic interventions like "active forgetting", which periodically reset the token (un)embeddings. Based on these observations, we devise a combination of techniques, involving a small architecture change facilitating copying, data diversity, and freezing or resetting (un)embeddings, that achieves generalization to unseen tokens. We support our claims with extensive controlled experiments on propositional logic reasoning problems. 
Beyond synthetic experiments, we also observe evidence of (un)embedding collapse in the open-weight models in the Gemma 3 family, which includes 99 unused tokens reserved for downstream use. Empirically we find that the correlated embeddings of these tokens are a poor initialization for finetuning applications.
\end{abstract}

\section{Introduction}

Modern transformer-based large language models (LLMs) \citep{vaswani2017attention} trained using next-token prediction have achieved significant success across a wide range of tasks, including benchmarks requiring formal reasoning such as mathematics and science. At the same time, researchers continue to uncover simple ways in which transformers fail on reasoning problems -- for example, by changing names and numerical values or inserting irrelevant information \citep{jiang2024peek,mirzadehgsm}.  This fragility suggests that LLMs are not yet capable of fully general reasoning and can rely on superficial patterns to produce answers.

In this work, we investigate the ability of decoder-only transformer models to perform abstract symbolic reasoning. Specifically, we study the models' ability to execute in-context algorithms on inputs containing arbitrary new tokens as symbols (see Figure~\ref{fig:simple_logic} for an example). Thus, the models need to generalize solely based on the problem structure and not based on the content of token embeddings. 
This problem was studied by \citet{boixcan} in the neural tangent kernel (NTK) regime, with only the (un)embedding\footnote{
We consider transformers with tied \emph{embedding} and \emph{unembedding} (first- and last-layer, resp.) matrices, which we will refer to as the (un)embedding parameters. Here the same parameter matrix is used for the embedding and unembedding layers, the latter being a transpose of the former. We will refer to these parameters as the \emph{(un)embedding}. 
This architecture choice is used in \citet{boixcan} and also commonly used in practice, for example in the Gemma 3 models \citep{team2025gemma}.} 
parameters trainable. They showed that transformers can learn to reason with unseen tokens given highly diverse training data, 
but fail at tasks that require copying (or generating) unseen tokens, and proposed a simple architecture change to improve copying capabilities. 
\citet{ananddual} studied generalization to unseen tokens empirically 
and found that periodically resetting the embeddings (or "active forgetting") during training helps, but did not provide an 
explanation beyond not storing information in the embeddings. Furthermore, they did not consider tasks requiring symbolic copying, which remains difficult with their approach. 

Our work identifies a key issue in generalization to unseen tokens that has been overlooked by previous work. We find that the (un)embeddings of the unseen tokens \emph{collapse} to nearly the same vector during the course of training. This leads transformers to struggle when unseen tokens are used to represent different variable names, as the corresponding variables become nearly indistinguishable. 
We prove theoretically that this collapse is inevitable when training transformers using SGD with weight decay and layernorm, and provide empirical evidence of it occurring in practice. Our work shows that the architecture change proposed by \citet{boixcan} is ineffective on its own in the presence of multiple unseen variables. The identified embedding collapse also explains 
the usefulness of the periodic resetting strategy of \citet{ananddual}. 
We also show that reliable symbolic reasoning can be achieved using a combination of a copy-enabled architecture, high symbolic diversity in the data, and either freezing or periodically resetting (un)embeddings to prevent collapse. 
We validate our findings using extensive controlled experiments on a testbed of synthetic propositional logic problems similar to the examples in Figure~\ref{fig:simple_logic}.

Beyond experiments on synthetic data, we also investigate the implications of our findings on larger open-weights models. 
We find that (un)embedding collapse is also present in 
the Gemma 3 model family \citep{team2025gemma}, which has 99 deliberately unused tokens for downstream use. We demonstrate empirically that the correlated embeddings of these tokens are a poor initialization for downstream finetuning applications.

\begin{figure}[t!]
\small
\begin{subfigure}{.5\textwidth}
\begin{framed} 
{\color{blue} {\bf Rules:} If A then B.  {\bf Facts:} A. 
{\bf Query:} B ? } \newline
{\color{magenta} {\bf Reasoning:} 
 Facts: A. 
 If A then B. Facts: A, B.
 }\!\!\!\!\newline
 {\color{teal} {\bf Answer:} yes}
 \end{framed}
 \end{subfigure}
 \begin{subfigure}{.5\textwidth}
\begin{framed} 
{\color{blue} {\bf Rules:}
 If A then U.  {\bf Facts:} A. 
 {\bf Query:} U? } \newline  \newline
{\color{blue} {\bf Rules:}
 If U1 then U2.  {\bf Facts:} U1. 
 {\bf Query:} U2? }
 \end{framed}
 \end{subfigure}
    \caption{{\bf Left}: An example of a propositional logic problem with reasoning used for training. Given a set of \emph{rules} (definite clauses), a set of \emph{facts} (true zero-order predicates), and a query predicate, the goal is to compute the truth value of the query. The reasoning trace (shown above) executes the forward-chaining algorithm, and stops if the query is proven true. {\bf Right}: Two types of test examples. At test time, we use unseen symbols (denoted here by U, U1, U2) for either just the query variable, or for all variable names.}
    \label{fig:simple_logic}
\end{figure}

\section{Related Work}

\paragraph{Symbolic reasoning} Several previous works have demonstrated that transformers suffer from \emph{token bias} \citep{jiang2024peek}: perturbing certain input tokens can induce predictable changes in the LLM output. \citet{jiang2024peek} show that changing names, inserting celebrity references and irrelevant content, and replacing quantifiers with synonyms degrades LLM performance on logical fallacies. \citet{mirzadehgsm} introduce a version of the GSM8k mathematical reasoning benchmarks with modified numerical values and inserted irrelevant information. \citet{mccoy2024embers,mccoy2024language} show that the accuracy of LLMs is heavily influenced by the likelihood of task formulations, inputs, or outputs appearing in the training data. All these works suggest that transformers can rely on statistical shortcuts rather than performing true reasoning. 
\citet{boixcan} study the ability of transformers to reason with abstract symbols on relational tasks, in the neural tangent kernel (NTK) limit. They show that on regression-type tasks (computing a numerical label based on a symbol pattern), transformers can generalize to unseen tokens when trained on data with a large amount of symbolic substitution. On the other hand, they show that tasks involving generating an unseen token (such as copying from context) are considerably more difficult and may require architectural changes. 
\citet{ananddual} define \emph{structural in-context learning}  as the ability of a model to execute in-context learning (ICL) on arbitrary novel tokens, which is similar to symbolic reasoning. They study structural ICL on simple part-of-speech classification problems, and find that it is transient. They observe that it can be maintained with an \emph{active forgetting} strategy \citep{chen2023improving}, which resets the embedding matrix every $k$ steps, and propose \emph{temporary active forgetting}, where this is only done at the beginning of training. However, their work does not explain any mechanisms behind the failure.
\todonev[inline]{ add maybe: 
Multi-head Transformers Provably Learn Symbolic Multi-step Reasoning via Gradient Descent
}
\vspace{-1em}

\paragraph{Copying architectures} Multiple works explore neural architectures that enable copying of inputs, where attending to input tokens increases the probability of copying them. \citet{ontanon2022making} mix a distribution over input tokens with the standard transformer predictive distribution and find that this is helpful in solving compositional tasks that involve symbolic reasoning.  \citet{boixcan} mix the copying component at the level of logits, and our work studies a similar architectural change. Similar copying mechanisms to these have previously been studied in sequence-to-sequence models, starting with the \emph{pointer networks} of \citet{vinyals2015pointer} \citep[see, e.g., ][]{see2017get, gu2016incorporating}.

\paragraph{In-context learning (ICL)}  Token bias can partially be attributed to models inappropriately using in-weights knowledge in place of context information. A recent line of works examines the effect of distributional properties of the training data on the emergence of ICL  \citep{chan2022data,reddy2023mechanistic,singh2023transient,chantoward2025}. These works find that ICL can emerge in transformers when there is a large number of classes and large within-class variation. \citet{chan2022data,chan2022transformers} further observe that ICL and in-weight learning (IWL) can emerge simultaneously when the distribution over classes is Zipfian. \citet{chantoward2025} propose a simplified model that uses a gating mechanism to choose between IWL and ICL predictors based on test error. Since the test error is unavailable, it is approximated by the best choice in hindsight based on the difference between the losses of the two predictors.  Multiple works have noticed that ICL can become transient in an asymptotic training regime \citep{chan2022data,chan2022transformers,chantoward2025,panwarcontext}.
In the setting of ICL for ridge regression, \citet{raventos2023pretraining} show that models trained on a small number of tasks behave like the Bayesian estimator over the training-task distribution, but is able to generalize to unseen tasks (implementing essentially ridge regression) when the number of training tasks is sufficiently large. 
\citet{he2024learning} study ICL for modular arithmetic and find that models transition from in-distribution to out-of-distribution generalization as the number of pretraining tasks increases. A common observation in these papers is that data diversity is helpful for ICL, and we observe the same in our problems.
\vspace{-0.5em}


\paragraph{Chain-of-thought (CoT) reasoning}
CoT reasoning enables LLMs to generate intermediate computational steps prior to finalizing the answer to a question. CoT has played a key role in LLMs achieving exceptional performance on tasks such as mathematical problem-solving and code generation \citep{chowdhery2023palm,achiam2023gpt,anil2023palm,trinh2024solving,romera2024mathematical}.  Here, we study problems in which CoT reasoning requires computing quantities based on previously unseen symbolic variables, as well as referring to unseen symbolic variables by copying them. 
\vspace{-0.5em}

\paragraph{Randomized embeddings} Randomized position embeddings have been shown to help with length generalization in arithmetic tasks \citep{ruoss2023randomized,shen2023positional}. This is because conventional positional embeddings are out-of-distribution when increasing the sequence length beyond the training-time window. Our work shows that using randomized embeddings for all tokens can be helpful in generalization to unseen tokens.
\section{Transformer architecture}
\label{sec:architecture}

Given a vocabulary $\cV$, a decoder-only transformer with parameters $\theta$ maps a sequence of tokens $(x_1, \ldots, x_n) \in \cV^n$ to a probability distribution over $\cV$, denoted by $p_\theta(\cdot |x_1, \ldots, x_n)$.  Let $X$ be an $n \times |\cV|$ matrix of stacked indicator vectors for the input tokens. The architecture of an $L$-block transformer with a single attention head per block is as follows.\footnote{We describe a single attention head for simplicity of exposition. Our empirical results rely on multi-head attention.}
The tokens are first embedded into a $d$-dimensional space using the embedding/unembedding matrix $W_E$ as $Y_0 = X W_E$. The $L$ transformer blocks have the following form, defined recursively over the transformer blocks $i=0,1,2,\ldots,L-1$: 
\begin{align*}
    Y_0 & = X W_E , \;\;
    A_i = \attn_i(\lnorm(Y_i)) , \;\;
    Y_{i+1}  = Y_i + A_i + \mlp_i(\lnorm(Y_i + A_i))
\end{align*}
where\\[-1em]
\begin{itemize}[leftmargin=*]
    \item \lnorm \;\citep{ba2016layer} normalizes the per-example activations of a layer to have mean and standard deviation close to 0 and 1, respectively, and optionally rescales them by a learned parameter;
    \item $\mlp_i$ is a feed-forward network mapping its input as
    $\mlp_i(Y) = \max(0, Y W_{1}^{(i)} + b_{1}^{(i)})W_{2}^{(i)} + b_{2}^{(i)}$, 
where $W_{1}^{(i)}$, $b_{1}^{(i)}$, $W_{2}^{(i)}$, and $b_{2}^{(i)}$ are trainable;
\item $\attn_i$\; computes a linear projection of the inputs into query, key, and value matrices using trainable parameters $W_Q^{(i)}, W_K^{(i)}, W_V^{(i)}$, respectively. For each token, the query and key are used to compute a weighted combination of the values of the preceding tokens:
\begin{align*}
\attn_i(Y)
&= \sm(M \odot (YW_Q^{(i)} W_K^{(i)\top} Y^\top) / \sqrt{d}) {\color{blue}Y W_V^{(i)} W_O^{(i)} }
\end{align*}
where $M$ is a causal mask and $W_O^{(i)}$ is the output projection matrix.  We incorporate position information by applying RoPE  \citep{su2024roformer} to the query and key matrices.
\end{itemize}
\vspace{-0.5em}
The next-token probabilities are computed as
$p(\cdot | X) = \sm(\lnorm(Y_L) W_E^\top)$. 
This is an $n \times |\cV|$ matrix, whose last row gives the probability of the $(n+1)^{st}$ token given $x_1, \ldots, x_n$. Note that we assume that the embedding and unembedding matrices are tied ($W_E$ and $W_E^\top$, resp.). 

\paragraph{Copy attention heads}
As discussed earlier, copying symbols from the input to the output is often needed when reasoning about in-context information (such as in the logic tasks of Figure~\ref{fig:simple_logic}). Unfortunately, this can be challenging in the case of unseen tokens that models have been trained not to generate. 
Here we describe a minor architecture change that is helpful in copying both seen and unseen symbols. Let $A_j$ denote the $j^{th}$ row of a matrix $A$. Assuming that the rows of $W_E$ are almost orthogonal and almost normalized, a transformer will copy a token $x_{i}$ to position $j+1$ if $\lnorm (Y_{L})_j \approx W_{E, x_i}$. However, it may be difficult for the transformer to propagate an arbitrary input embedding $x_i$ through $L$ layers involving MLPs. We thus augment the architecture with a shortcut to the input embeddings, which we name \emph{copy attention}, implemented as follows:
\begin{align*}
    \cattn(Y) = \sm( M \odot (YW_Q W_K^\top Y^\top) / \sqrt{d})  {\color{red}X W_E }.
\end{align*}
Compared to standard attention, we replace the value and output projection $YW_V W_O$ with the input embeddings $X W_E$. Thus, for each $i$, the $i^{th}$ row of $\cattn(Y)$ will be a weighted combination of the input embeddings of tokens $x_1, ..., x_i$.
We now compute the next-token distribution as
\begin{align*}
    Z  &= \cattn(Y_{L-1}), \;\;
    p_{copy}(\cdot|X) = \sm(\lnorm(Y_L + Z) W_E^\top).
\end{align*}
Thus, ignoring layernorm, if the embeddings $W_E$ are approximately orthonormal and if the copy head sharply attends to a token, it will increase its output probability. Empirically we observe similar benefits of copy attention with and without output layernorm.
\todoa{\lnorm?}\todonev{repeated all experiments without lnorm}

The described copy attention is a modified version of the architecture change proposed by \citet{boixcan} for a single-layer transformer without MLPs, which outputs $\attn(X) = \sm(M \odot (XW_E W_Q W_K^\top W_E^\top X^\top) / \sqrt{d}) XW_E (W_V W_O + {\color{red}bI})$,  where $b$ is a trainable scalar and $I$ is the identity matrix. Effectively, this change adds the scaled input embeddings $bXW_E$ to the standard value-output parameters $W_V W_O$. The main difference in our implementation is that we use separate attention heads for copying (i.e. additional query and key parameters). This enables us to keep the MLPs in the standard attention blocks while not applying them to copy attention, and to trade-off between copying and standard generation more flexibly than via a single parameter. 

\section{Collapse of softmax weights for unseen labels}
\label{sec:dynamics}

As we will demonstrate, transformers struggle with symbolic reasoning that requires copying unseen tokens, and one of the reasons for this failure is the collapse of the unembeddings corresponding to those tokens. In this section we provide some insight into this phenomenon. Specifically, we show that once the model fits the data sufficiently well (to be specified next), each step of $\ell_2$-regularized gradient descent (GD) or stochastic gradient descent (SGD) will decrease the distance between the unembeddings of unseen tokens. 

In what follows, we consider the general problem of training a softmax classifier on a dataset $\cD = \{(x_n, y_n)\}_{n=1}^N$ of inputs $x \in \mathbb{R}^{d'}$ and labels $y \in [K]$. 
Let $\phi(\cdot|\theta): \mathbb{R}^{d'} \rightarrow B(r, d) $ be a neural network, parameterized by $\theta$, mapping the input features to a $d$-dimensional ball of radius $r$, $B(r, d) = \{\psi \in \mathbb{R}^d : \| \psi\| \leq r \}$.  We will use the shorthand $\phi_n := \phi(x_n | \theta)$ and $\phi_n^t := \phi(x_n | \theta^{(t)})$, where $\theta^{(t)}$ are the parameters at training step $t$. 
We consider a model in which the probability of a label is given by a softmax:
\begin{align}
\label{eq:smax}
 p(y|x, W, \theta) 
 = \frac{\exp(\phi(x|\theta)^\top W_{y})}{ \sum_{k=1}^K \exp(\phi(x|\theta)^\top W_{k})} 
\end{align}
where $W_k$ are the weights corresponding to class $k$. 
The parameters $(W, \theta)$ are optimized by minimizing the $\ell_2$-regularized negative log-likelihood with some regularization coefficient $\lambda>0$:
\begin{align*}
    L_{\lambda}(W, \theta | \cD) = -\frac{1}{N} \sum_{n=1}^N \ln p(y_n | x_n, W, \theta) + 0.5 \lambda \|W \|^2_F 
\end{align*}

The gradient of the  loss with respect to parameters $W_i$ is
  $  g_i := 
    \frac{1}{N} \sum_{n=1}^N (p(i|x_n, W, \theta) - I[i = y_n]) \phi_n + \lambda W_i$, 
where $I[i = y_n]$ is a label indicator function. 

\subsection{Contraction of unseen-label weights for GD/SGD}

Let $W^{(t)}$ denote the softmax weights at step $t$. Suppose that none of the examples in the dataset have the label $i$ or label $j$. The following lemma shows that in this case the parameter vectors $W_i$ and $W_j$ may contract:
\begin{lemma}\label{lemma:gd}
Suppose that we update the weights of the softmax classifier in Eq~\eqref{eq:smax} using gradient descent with learning rate schedule $\eta_t$. 
 If labels $i$ and $j$ are not present in the data, then the weights at time $t$ satisfy 
\begin{align*}
\left\|  W_j^{(t+1)} - W_i^{(t+1)} \right\| \leq \left(1 - \lambda \eta_t + \eta_t r^2 p_t \right) \left\|  W_j^{(t)} - W_i^{(t)} \right\|
\end{align*}
where $p_t := \max_{k \in \{i, j\}, n \in [N]}  p(k|x_n, W^{(t)}, \theta^{(t)})$.
\end{lemma}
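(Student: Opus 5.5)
The plan is to write out the gradient-descent update for the two unseen-label weight vectors explicitly, subtract them, and bound the resulting difference. Since neither $i$ nor $j$ occurs as a label, the indicator terms $I[i=y_n]$ and $I[j=y_n]$ vanish for every $n$. The gradient formula stated before the lemma then gives, for $k\in\{i,j\}$,
\begin{align*}
W_k^{(t+1)} = (1-\lambda\eta_t) W_k^{(t)} - \frac{\eta_t}{N}\sum_{n=1}^N p(k|x_n, W^{(t)}, \theta^{(t)})\, \phi_n^t .
\end{align*}
The gradient is evaluated at $(W^{(t)},\theta^{(t)})$. Hence a simultaneous update of $\theta$ does not affect this step, and only $\|\phi_n^t\|\le r$ is used about the network. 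Writing $D^{(t)} := W_j^{(t)} - W_i^{(t)}$ and $p_k^n := p(k|x_n, W^{(t)}, \theta^{(t)})$, subtraction gives
\begin{align*}
D^{(t+1)} = (1-\lambda\eta_t) D^{(t)} - \frac{\eta_t}{N}\sum_{n=1}^N \bigl(p_j^n - p_i^n\bigr)\phi_n^t .
\end{align*}
The triangle inequality together with $\|\phi_n^t\|\le r$ then reduces the lemma to bounding $|p_j^n - p_i^n|$ by a multiple of $\|D^{(t)}\|$.

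The key step, and the only nontrivial one, is to show
\begin{align*}
|p_j^n - p_i^n| \le p_t\, r\, \|D^{(t)}\| .
\end{align*}
Both probabilities share the same normalizer $Z_n = \sum_k \exp(\phi_n^{t\top} W_k^{(t)})$. Hence
\begin{align*}
p_j^n - p_i^n = \bigl(e^{a} - e^{b}\bigr)/Z_n, \qquad a = \phi_n^{t\top} W_j^{(t)},\quad b = \phi_n^{t\top} W_i^{(t)} .
\end{align*}
By the mean value theorem, $|e^a - e^b| \le \max(e^a,e^b)\,|a-b|$. Dividing by $Z_n$ gives $|p_j^n - p_i^n| \le \max(p_j^n,p_i^n)\,|\phi_n^{t\top} D^{(t)}|$. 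Cauchy--Schwarz bounds $|\phi_n^{t\top} D^{(t)}| \le r\|D^{(t)}\|$, and $\max(p_j^n,p_i^n)\le p_t$ by the definition of $p_t$. Averaging over $n$ and multiplying by $\|\phi_n^t\|\le r$ yields the term $\eta_t r^2 p_t \|D^{(t)}\|$.

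Combining the two terms gives
\begin{align*}
\|D^{(t+1)}\| \le |1-\lambda\eta_t|\,\|D^{(t)}\| + \eta_t r^2 p_t \|D^{(t)}\| .
\end{align*}
This is the claim provided $\lambda\eta_t \le 1$, which is the standard step-size regime. I would state this implicit assumption explicitly, since otherwise the factor must be read with an absolute value.

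The main obstacle I expect is the mean-value-type estimate above: the bound must be linear in $\|D^{(t)}\|$ and scale with the small probabilities $p_t$ rather than with a constant. The shared normalizer is exactly what makes this work. The same argument applies verbatim to SGD, replacing the average over $[N]$ by the average over the minibatch, since the bound holds termwise for each example.
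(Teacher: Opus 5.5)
Your proposal is correct and follows the paper's proof essentially step for step: you use the same subtracted update, the same mean-value-theorem estimate on the shared-normalizer exponentials giving $|p_j^n - p_i^n| \le p_t\,|\phi_n^{t\top} D^{(t)}|$, and the same bound via $\|\phi_n^t\| \le r$. Your remark that the stated inequality needs $\lambda\eta_t \le 1$ (otherwise $1-\lambda\eta_t$ must be replaced by $|1-\lambda\eta_t|$) is valid, and the paper leaves this assumption implicit.
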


The proof is given in Appendix~\ref{app:proof}. The same result holds for stochastic gradient descent by replacing the full batch with a minibatch, and having $p_t$ be the maximum probability of any unseen class for any minibatch example. We note that the result holds regardless of whether the parameters $\theta^{(t)}$ are fixed or changing over time, as long as the feature function $\phi(\cdot|\theta^{(t)})$ has bounded output. Thus, Lemma~\ref{lemma:gd} applies, for example, to transformers with layernorm in the last layer. 

Lemma~\ref{lemma:gd} implies that the weights for a pair of unseen labels will contract whenever $r^2 p_t < \lambda$. Furthermore, if our learning rate $\eta_t$ is bounded away from zero, and assuming that the classifier has sufficient capacity to fit the data so that the probability of unseen labels $i, j$ approaches zero for training examples, we see the corresponding weight entries collapse.

Thus we have shown that a step of $\ell_2$-regularized GD/SGD will contract the last-layer weights of unseen labels if the input features have bounded norm and the model fits the data well, even if the features are changing over time. This setting corresponds to modern deep neural networks which overfit training data and use layer normalization to ensure boundedness of the learned features.

\subsection{Empirical collapse of unseen-token (un)embeddings under AdamW}

While our analysis considers SGD, in practice transformers are often trained using the AdamW optimizer \citep{loshchilov2017fixing}, for which obtaining similar results is more involved. In this section we empirically demonstrate that similar collapse also happens when training transformers using AdamW. In particular, we train transformers on logic problems using a vocabulary with 100 deliberately heldout tokens (see Section~\ref{sec:experiments} for full experimental details),  using AdamW with different regularization strengths $\lambda$, with and without layernorm in the last layer. Figure~\ref{fig:cossim_hparam} shows the mean cosine similarity between the (un)embeddings of seen tokens, and between the (un)embeddings of unseen tokens.
We observe that unseen tokens are much more correlated than seen tokens, especially in the presence of layer normalization, even without regularization ($\lambda=0$).

\begin{figure}[t]
    \centering    \includegraphics[width=0.6\textwidth,trim={0 0 20.6cm 0},clip]{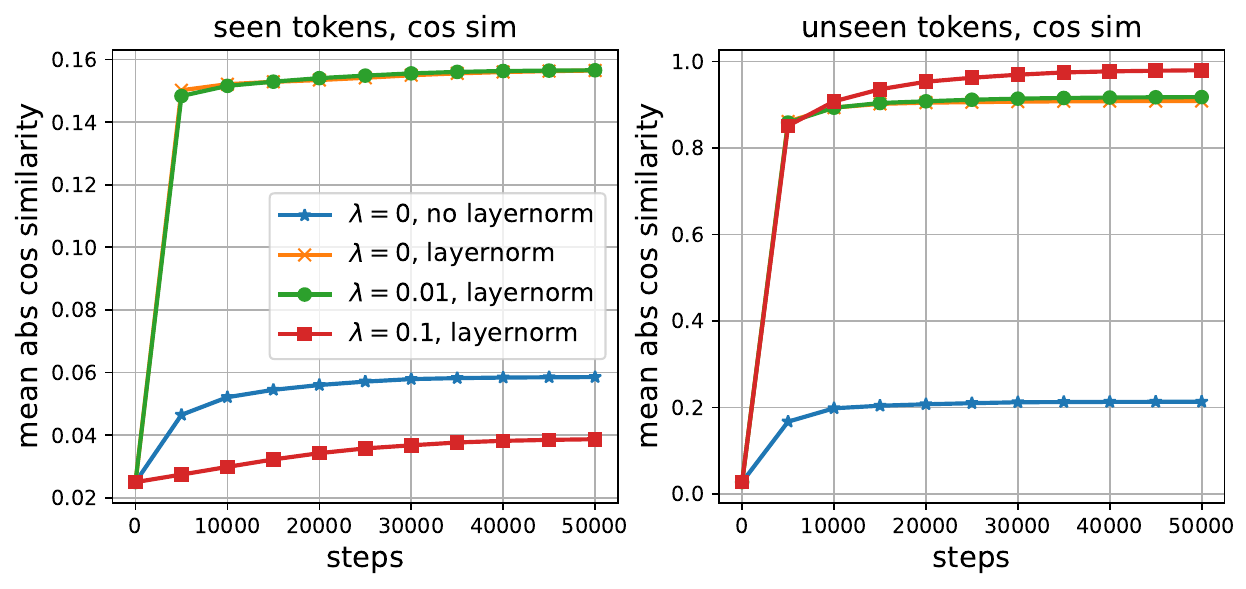}
    \vspace{-.1em}
    \caption{Mean pairwise cosine similarities of the unembeddings for different hyperparameter choices (note the different scale of the vertical axis in the two graphs). 
    In the presence of layernorm, the unembeddings of unseen tokens collapse to nearly the same-direction vector (also note that the actual magnitudes are also very close). This is consistent with Lemma~\ref{lemma:gd} which requires the inputs to the softmax function to be bounded (i.e., normalized). Collapse is more severe for higher regularization $\lambda$.
    }
    \label{fig:cossim_hparam}
    \vspace{-.2em}
\end{figure}


\section{Experiments on symbolic propositional logic}
\label{sec:experiments}

In this section we study how transformers can learn to reason in the context of propositional logic problems, and demonstrate the issues and solutions highlighted earlier about generalization to unseen symbols.

\paragraph{Propositional logic problem}  
We create synthetic propositional logic problems involving only definite clauses (also known as Horn clauses). 
The goal is to determine the truth value of a query predicate given a set of rules (definite clauses, e.g., 'If A then B') and facts (true zero-order predicates, e.g., 'A is true').  We generate random synthetic problems similarly to the rule-priority recipe of \citet{zhang2023paradox}, but additionally include chain-of-thought (CoT) reasoning that can be used to derive the answer. 
The CoT is generated by running the \emph{forward-chaining} algorithm, which iteratively applies the \emph{modus ponens} rule to derive new facts until either the query predicate is proven or no further predicates can be proven true. If a predicate cannot be proven true, it is false. See Figure~\ref{fig:simple_logic} for an example and data format. For logic problems involving only Horn clauses, modus ponens is known to be sound and complete: any predicate entailed by the rules and facts can be derived by iteratively applying modus ponens.    
The generated examples contain up to 20 predicates per question and up to 40 rules per question.  We ensure balanced labels by selecting a true predicate as the query with probability 0.5.
\vspace{-.5em}

\paragraph{Model} We train a small 10-layer decoder-only model (60M-75M parameters, depending on the number of predicates) from scratch on this data using the NanoDO library \citep{nanodo} and the AdamW optimizer \citep{loshchilov2017fixing} with weight decay $\lambda=0.001$.   Rather than using a fixed dataset, we continually generate fresh examples on-the-fly during training. We only train the model to predict the tokens corresponding to the reasoning and the answer, that is, we mask out the question tokens.\footnote{In our initial experiments, we found that training the model to predict the randomly-generated questions in addition to answers led to worse performance and increased hallucinations in the CoT.} We experiment both with standard architectures and copy attention, using a custom vocabulary to control the tokenization of the problems. 
See Appendix~\ref{sec:appendix_experiments} for further details on the data, architecture, and training. 

\begin{figure*}[t!]
    \centering
    \includegraphics[width=0.95\textwidth]{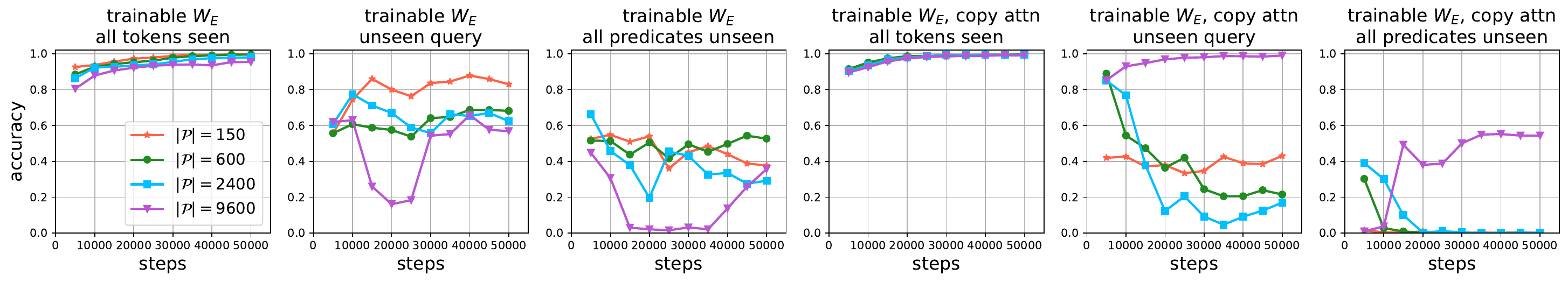}
     \includegraphics[width=0.95\textwidth]{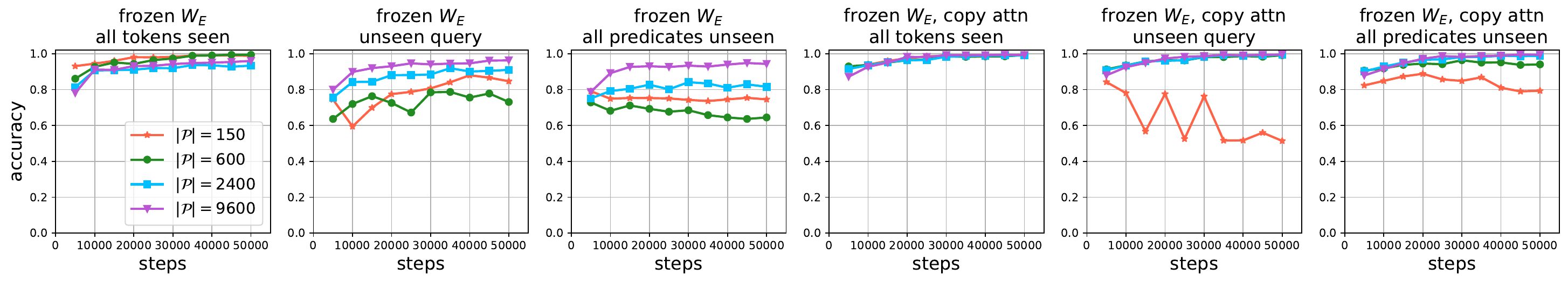}
    \includegraphics[width=0.95\textwidth]{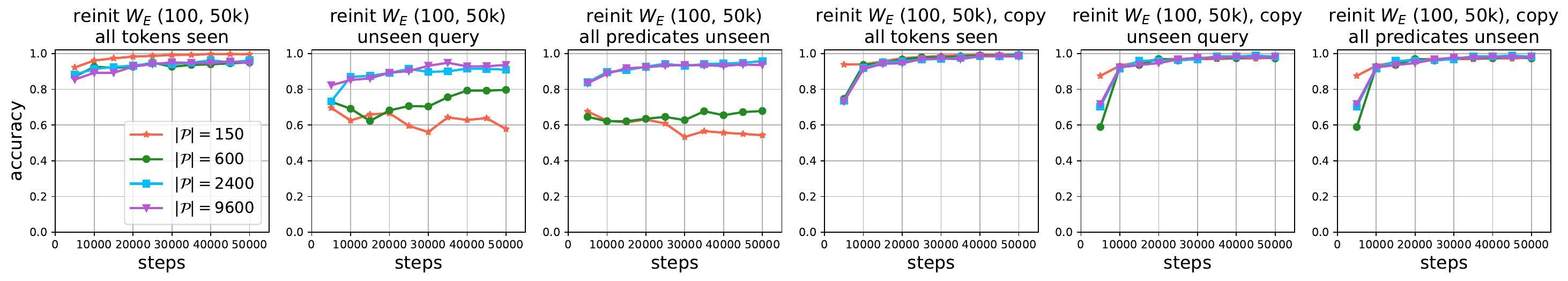}
         \includegraphics[width=0.95\textwidth]{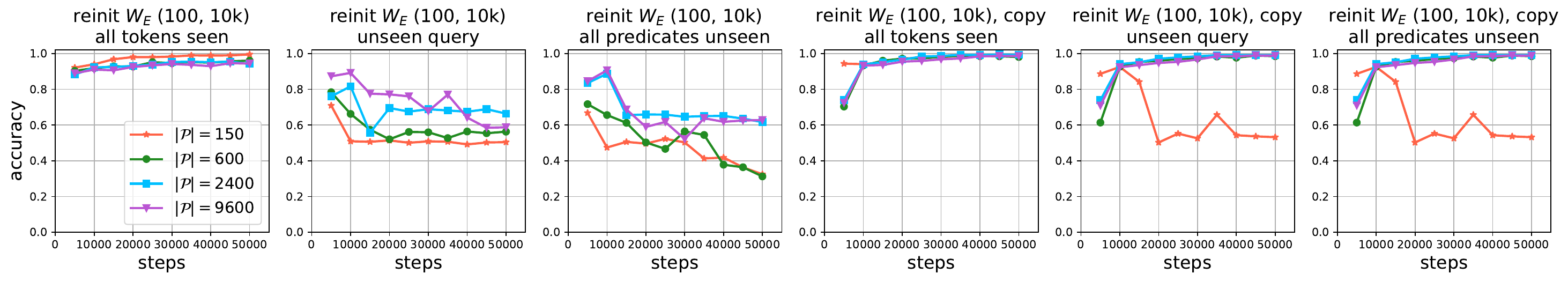}
    \caption{Propositional logic evaluation accuracy for models trained on single-token symbols. 
    We evaluate accuracy with (1) all tokens seen, (2) query unseen, (3) all predicates unseen. The rows correspond to the unembeddings being (1) trainable (2) frozen (3) periodically reinitialized  every 100 steps throughout training, and (4) periodically reinitialized every 100 steps for the first 10k steps. We train each model with and without copy attention (right and left side of the figure).}
    \label{fig:single_token_nodelim}
\end{figure*}

\paragraph{Experiment settings} In this setting, each logic predicate corresponds to a single token. The predicates for each problem are selected uniformly at random from a set $\cP$ (and thus the size of $\cP$ serves as a proxy for symbolic diversity). The vocabulary for each training task includes 100 unused tokens which are subsequently used as predicates during evaluation. 
We train models \emph{with and without copy attention}, and with \emph{trainable, frozen, and periodically reinitialized (un)embeddings}. We vary the size of the training predicate vocabulary $\|P\| \in \{150, 600, 2400, 9600\}$ as a proxy for diversity. We evaluate the accuracy of the trained models on a randomly-generated set of 2048 examples with (1) all tokens seen, (2) query unseen, and (3) all predicates unseen.

\paragraph{Results and discussion} The results are shown in Figure~\ref{fig:single_token_nodelim}. 
All model versions perform well on in-distribution data (seen tokens). The standard transformer fails when one or more symbols are replaced by unseen tokens. Examining outputs reveals that this is because the model learns not to generate the unseen token (needed for reasoning). The model instead either ignores the token or replaces it with a seen predicate (see Figure~\ref{fig:cot_single_token_vanilla} for some examples of generated CoT in this case).
Adding copy attention and increasing diversity results in good performance on the evaluation with a single unseen token. In fact, early in the training, models with vocabulary size 600, 2400, and 9600 all generalize well and copy the unseen token in their CoT, but only the model with $|\cP|=9600$ keeps this inductive bias. This result is consistent with existing works on in-context learning (ICL), which show that a certain diversity threshold is necessary for generalization \citep{raventos2023pretraining, he2024learning}, and that ICL is transient \citep{chan2022data,panwarcontext,chantoward2025}.
Copy attention fails in the case where all symbolic variables are replaced by unseen tokens, regardless of diversity. This is explainable at least in part by the collapse of the unseen (un)embeddings, which makes these variables difficult to distinguish.

Freezing the embeddings to their initialization during training (which are approximately orthogonal and hence well-distinguishable for different tokens) allows all models to generalize to unseen tokens, with the copy-attention models achieving higher accuracy (or having a lower diversity threshold).  Similarly, \emph{active forgetting} \citep{chen2023improving} (reinitializing embeddings every 100 steps) generalizes to unseen tokens given sufficient diversity and/or an architecture with copy heads.\footnote{We tried reinitializing every 100 and every 1k steps, for 10k steps or throughout training; see Appendix~\ref{sec:active_forgetting} for full experiments.} 
These results further support our hypothesis that symbolic reasoning is impeded by learning not to emit unseen tokens and the unembedding collapse.\footnote{We have also tried freezing / reinitializing only the embeddings of the predicate tokens. Contrary to our intuition, this yielded worse generalization compared to freezing / reinitializing all embeddings, and requires further investigation.} 
Unfortunately, freezing and reinitializing are somewhat unsatisfactory solutions for general-purpose models, which typically benefit from trainable embeddings.\footnote{See also Figure~\ref{fig:c4} in the Appendix~\ref{app:c4} for some empirical evidence in this regard, where we compare models with trainable and frozen embeddings on the C4 corpus \citep{raffel2020exploring}.}  
Finally, we evaluated \emph{temporary active forgetting} \citep{ananddual} (reinitializing for the first 10k steps of the training). We observed that it does not generalize well for the standard architecture (no copy attention) -- performance collapses once reinitialization stops. However, it does generalize well with the copy attention architecture for all but the smallest vocabulary $\cP$. While the unseen unembeddings again collapse in this setting (having a cosine similarity of about 0.9), the model with copy attention appears to attend to tokens sharply enough to distinguish different symbols.

\section{Experiments on Gemma3}

In this section, we investigate whether our findings related to the collapse of embeddings hold for larger (open-weights) models, namely the Gemma~3 series \citep{team2025gemma}. These models use tied embedding and unembedding parameters, as assumed throughout. 
\vspace{-0.5em}

\paragraph{Collapse of unused tokens} The Gemma~3 tokenizer vocabulary is of size 262,144 and includes 99 deliberately unused tokens. For each instruction-tuned model size (1B, 4B, 12B, 27B), with embedding dimensions (1152, 2560, 3840, 5376) respectively, we compute the cosine similarities of the embeddings of the unused tokens and a randomly chosen subset of 100 used tokens. The results are shown in Figure~\ref{fig:unused}: 
in all models, the unused tokens have a higher cosine similarity than the used ones. The mean cosine similarities are (0.78, 0.33, 0.23, 0.24) for the unused tokens and (0.09, 0.03, 0.02, 0.01) for the used tokens.%
\footnote{For the pretrained models, the mean cosine similarities are (0.79, 0.43, 0.33, 0.29) for the unused tokens and (0.06, 0.03, 0.02, 0.01) for the used tokens}
Thus the effect of the collapse is smaller for larger models.  
\vspace{-0.5em}

\paragraph{Finetuning with unused tokens} To investigate whether the collapse of unused tokens is an issue for downstream applications relying on such tokens, we finetune Gemma~3 1B IT on propositional logic data constructed similarly as before, with 80 symbols (predicates) corresponding to (1) unused tokens, (2) English adjectives, and (3) ASCII lowercase and uppercase letters and repeated letters (e.g., 'aa'). We verify that all symbols correspond to single tokens. We finetune all parameters using AdamW with a constant learning rate $5 \cdot 10^{-5}$ and parameters $\beta_1=0.9$, $\beta_2=0.999$, $\epsilon=10^{-8}$, $\lambda=10^{-4}$, using batch size of 128. We show evaluation accuracy on a holdout dataset of 512 examples (with the same predicates as the trained model) in Figure~\ref{fig:gemma_sft_const} for 4 random training runs. We observe that training using unused tokens can be slow: the runs using adjectives and letters as symbols reach $\sim$90\% accuracy in under 500 steps, while the runs involving unseen tokens take about 5000 steps to reach similar performance. The runs involving letters are the most stable, perhaps because these are more commonly used as variable names in the training data.
\vspace{-0.5em}

\begin{figure}[t!]
    \centering
    \includegraphics[width=0.8\textwidth]{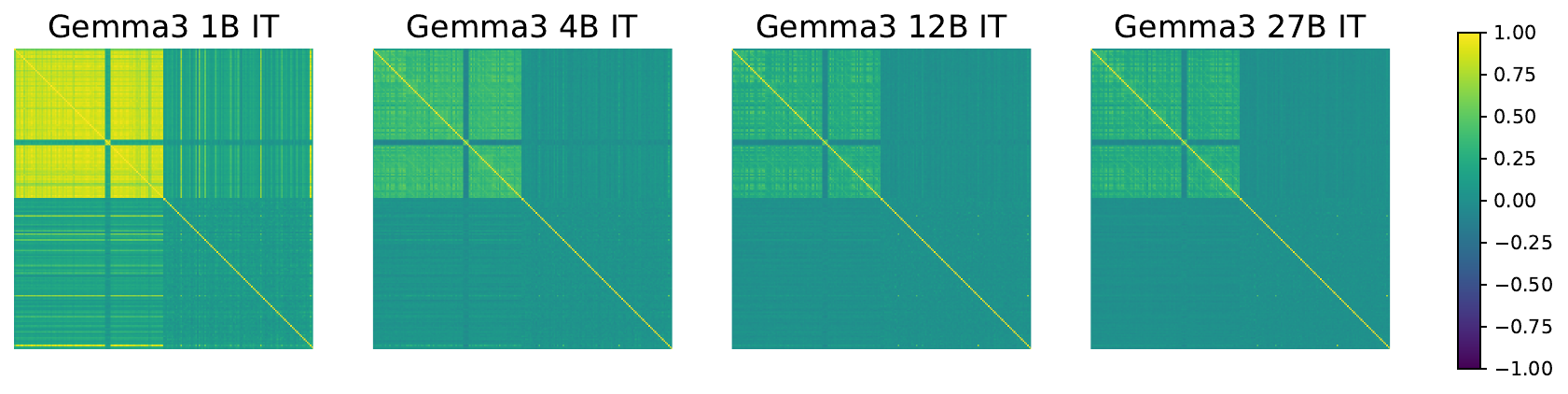}
    \vspace{-0.7em}
    \caption{Cosine similarities between the embeddings of the 99 unused tokens and 100 randomly chosen tokens. In all models, the unused tokens are correlated more than the used ones, with the effect more pronounced the smaller the embedding dimension is. }
    \label{fig:unused}

\vspace{0.7em}

    \centering
    \includegraphics[width=0.6\textwidth]{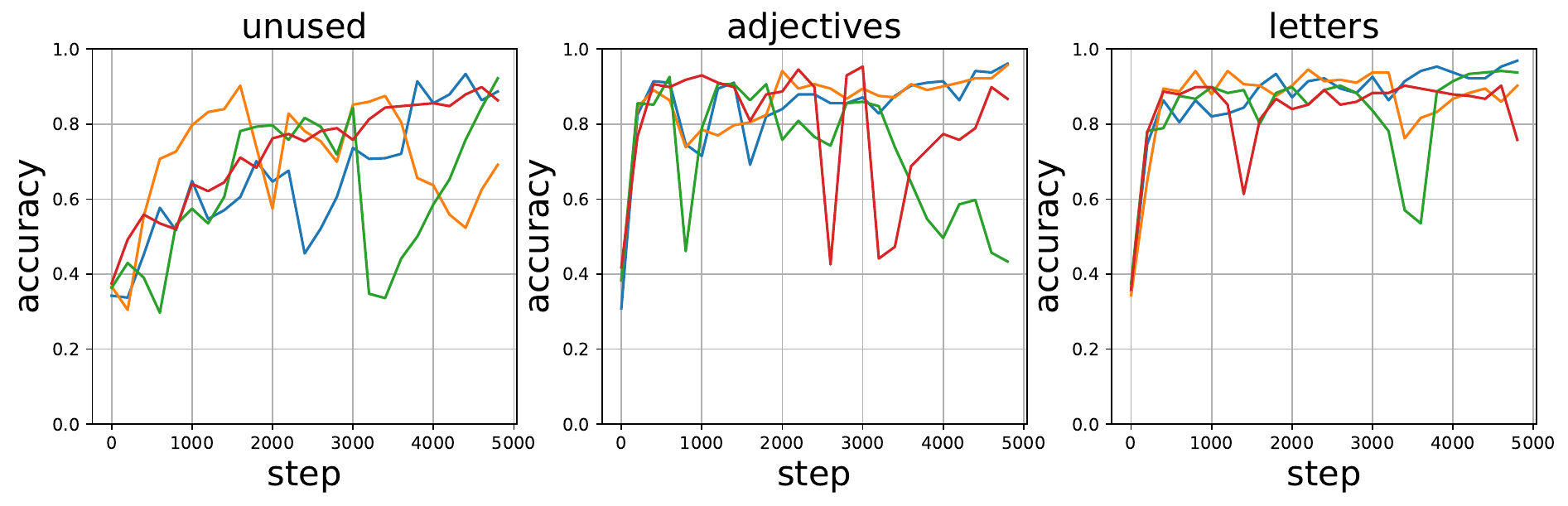}
    \vspace{-0.5em}
    \caption{Accuracy when finetuning Gemma~3 1B IT on logic problems with predicates corresponding to unused tokens, English adjectives, and letters. Curves correspond to random seeds. With adjectives and letters, accuracy reaches ~90\% in about 500 finetuning steps, whereas with unused tokens reaching 90\% accuracy requires about 5000 steps, which may be an artifact of embedding collapse.}
    \label{fig:gemma_sft_const}
    \vspace{-0.3em}
\end{figure}

\paragraph{Collapse of rare tokens during instruction tuning}
Next we investigate whether collapse of embeddings occurs during conventional instruction tuning of models. We compute the singular value decomposition (SVD) of the (un)embedding matrices of Gemma~3 models, normalized by their Frobenius norm. In Figure~\ref{fig:spectra}, we plot the difference between the sorted singular values of IT (instruction-tuned) and PT (pre-trained) models. In all models, the IT version places more mass on the top two singular values than the corresponding PT version, with the effect most pronounced in the 1B version.
We observe that tokens whose cosine similarity increases between PT and IT versions of Gemma~3 1B often correspond to html tags and non-Latin script symbols; for example the cosine similarity between {\bf </h6>} and {\bf );">} increases from -0.044 to 0.273. \todonev{there are examples with bigger changes, but don't know how to render them...} Thus it is plausible that these tokens are not included or rarely seen in the instruction tuning data, though we cannot definitively verify this claim. We speculate that this increase in correlation does not meaningfully degrade capabilities. 




\vspace{-0.3em}

\section{Discussion}

\vspace{-0.3em}

We have investigated the ability of transformers to reason with abstract symbols represented using previously-unseen tokens. We found that generalization to unseen tokens is hindered by an inductive bias that causes their (un)embeddings to collapse, and proposed methods for addressing this collapse. Our work resolves the issues with architectural interventions proposed by \citet{boixcan} in the presence of multiple unseen variables, and also helps explain the benefits of \emph{temporary active forgetting} \citep{ananddual}. 
We have also observed some evidence of collapse in the open-weight Gemma~3 models, and found that finetuning these models is considerably slower with unseen tokens.  
Some limitations of our work are that the theoretical analysis of the collapse includes additional assumptions and that most of our experiments are on small models and only on propositional logic problems. Interesting directions for future work include strategies for improving finetuning with collapsed unused tokens, as well as investigating reasoning with multi-token symbols (a preliminary study on the latter is given in Appendix~\ref{app:multi}).

\bibliography{ref}
\bibliographystyle{colm2026_conference}

\appendix
\newpage

\section{Proof of Lemma~\ref{lemma:gd}}
\label{app:proof}

\begin{proof}
Let $p_{i,n}^t = p(i |x_n, W^{(t)}, \theta^{(t)})$. 
The weights satisfy

\begin{align}
 W_j^{(t+1)} -  W_i^{(t+1)} =  (1 - \lambda \eta_t)\left(W_j^{(t)} - W_i^{(t)}\right) - \eta_t \frac{1}{N} \sum_{n=1}^N \left(p_{j,n}^t - p_{i,n}^t\right) \phi_n^t  \label{eq:wj}
\end{align}

By the Lagrange mean-value theorem, $\exp(b) - \exp(a) = \exp(c)(b - a)$ for some $c$ such that $a < c < b$. Assume without loss of generality that $\phi_n^{t\top} W_j^{(t)} \geq \phi_n^{t\top} W_i^{(t)}$. Setting $a = \phi_n^{t\top} W_i^{(t)} - \ln Z_t(\phi_n^t)$ and $b = x_n^{t\top} W_j^{(t)} - \ln Z_t(\phi_n^t)$, where $Z_t(\phi_n) = \sum_y \exp(\phi_n^\top W_y^{(t)})$ is the normalizer, we have that for some $p_{n}^t$ such that $\min(p_{i,n}^t, p_{j,n}^t) \leq p_n^t \leq \max(p_{i,n}^t, p_{j,n}^t)$
\begin{align}
   p_{j,n}^t - p_{i,n}^t &= \exp(\phi_n^{t\top} W_j^{(t)} - \ln Z_t(\phi_n^t)) - \exp(\phi_n^{t\top} W_i^{(t)} - \ln Z_t(\phi_n^t)) \nonumber \\
  &= p_n^t  \phi_n^{t\top} \left(W_j^{(t)} - W_i^{(t)}\right)
  \label{eq:delta}
\end{align}

Let $p_t = \max_{n} p_n^t$.  Plugging \eqref{eq:delta} into \eqref{eq:wj} and using the fact that inputs lie in a ball of radius $r$ gives the result.  
\end{proof}


\section{Implementation details}
\label{sec:appendix_experiments}

\paragraph{Architecture and training.} Our implementation of transformer training and evaluation builds upon the NanoDO framework \citep{nanodo}.  For the simple logic experiments, we train a decoder-only transformer with 10 layers and embedding size 1024 split across 8 heads. With copy attention, we add a copying block with 8 heads. The architecture is as described in Section~\ref{sec:architecture}. 
We use MLPs with hidden size 1024 and GELU activations \citep{hendrycks2016gaussian}. We use RoPE positional embeddings \citep{su2024roformer}.  We train using the AdamW optimizer \citep{loshchilov2017fixing} with decay 0.001, batch size 256, peak learning rate 0.0001, warmup and final rate of 0.00001, and 2000 warmup steps. We clip gradients by global norm 1.

We use a custom tokenizer designed for logic data. For the single-token symbol experiments, the vocabulary includes $|\cP| \in \{150, 600, 2400, 9600\}$ zero-order predicates, the special tokens shown in Figure~\ref{fig:special_tokens}, pad token, and 100 additional unused tokens. The vocabulary size is further padded by unused tokens to a multiple of 4.  
For the multi-token symbol experiments, the vocabulary includes 26 baseline tokens and sequences of these tokens form predicates. For each predicate in each example, we sample its length $i \in [n]$ and tokens uniformly at random, while ensuring there are no duplicates.

\begin{figure}[h!]
\begin{framed} 
{ Facts EndFacts Rules EndRules Query EndQuery Answer Reasoning EndReasoning Newfact EndNewfact If then EndIf yes no and ? <BOS> <EOS> <PAD> <UNK>}
 \end{framed}
    \caption{Special tokens in the vocabulary in addition to zeroth order predicates.}
    \label{fig:special_tokens}
\end{figure}

\begin{figure}[t!]
\begin{framed} 
{\color{blue} <BOS> {\bf Rules:}
 If A and B then C EndIf 
If B then A EndIf \newline
 {\bf Facts:} B EndFacts \newline 
 {\bf Query:} C ? } \newline
{\color{magenta} {\bf Reasoning:} \newline
 Facts: B EndFacts 
 If B then A EndIf 
 Newfact A EndNewfact \newline
 Facts: B and A EndFacts
  If A and B then C EndIf 
  Newfact C EndNewfact \newline
 Facts: B and A and C EndFacts} \newline
 {\color{teal} {\bf Answer:} yes EndAnswer <EOS>}
 \end{framed}
    \caption{Formatting example for a propositional logic problem with reasoning used for training. The separator 'and' is only used in the multi-token symbol experiments.}
    \label{fig:simple_logic_format}
\end{figure}

\paragraph{Data.} We generate propositional logic problems similarly to the rule-priority (RP) recipe described in \citet{zhang2023paradox}. RP problems are generated by randomly sampling a subset of predicates, and randomly sampling facts and rules using those predicates. The facts and rules are then used to compute the truth values. To balance label probabilities, we select the query to be a true predicate with probability 0.5. 
The problems we generated had between 5 and 20 predicates and between 0 and 40 rules. The examples presented to the model are formatted as in Figure~\ref{fig:simple_logic_format}. We generated reasoning traces of the form in Figure~\ref{fig:simple_logic_format} by running forward chaining until either the query predicate was proven true or no further predicates could be proven true, and generating text corresponding to the algorithm trace.

\section{Additional experiments with temporary active forgetting}
\label{sec:active_forgetting}

We evaluate the \emph{temporary active forgetting} approach of \citet{ananddual} on logic problems with single-token symbols. Here the (un)embeddings are periodically reinitialized every $k_1$ steps for the first $k_2$ steps of training. We consider the following values of $(k_1, k_2)$: $(100, 10k)$,  $(1k, 10k)$, $(100, 50k)$, $(1k, 50k)$. Note that we train for 50k steps in total, so some of the settings correspond to the (non-temporary) active forgetting approach of \citet{chen2023improving}. The results are shown in Figure~\ref{fig:active_forgetting}. We observe that temporary active forgetting does not load to the desired inductive bias (symbolic reasoning), unless combined with copy attention. Active forgetting throughout the training can generalize with the vanilla architecture, provided sufficiently high symbolic diversity. 

\begin{figure}[t!]
    \centering
    \includegraphics[width=\textwidth]{fig_single_reinit_100_10k.pdf}
        \includegraphics[width=\textwidth]{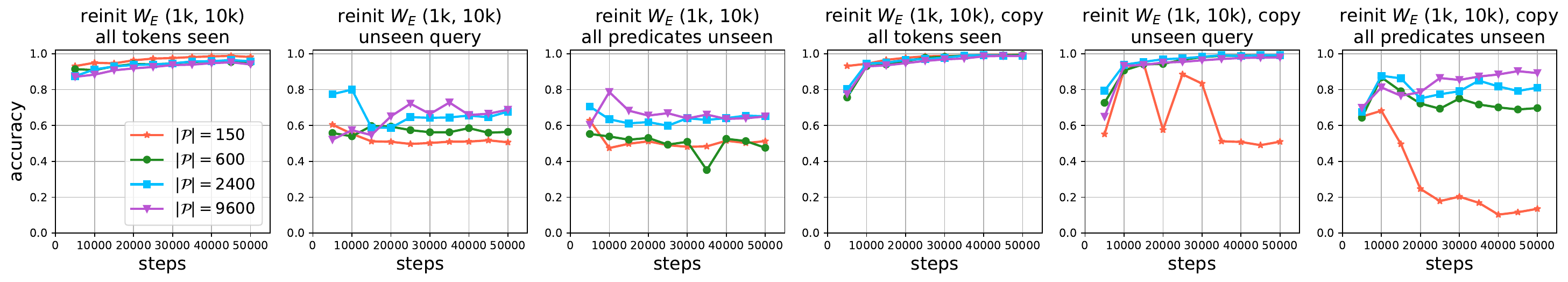}
     \includegraphics[width=\textwidth]{fig_single_reinit_100_50k.pdf}
    \includegraphics[width=\textwidth]{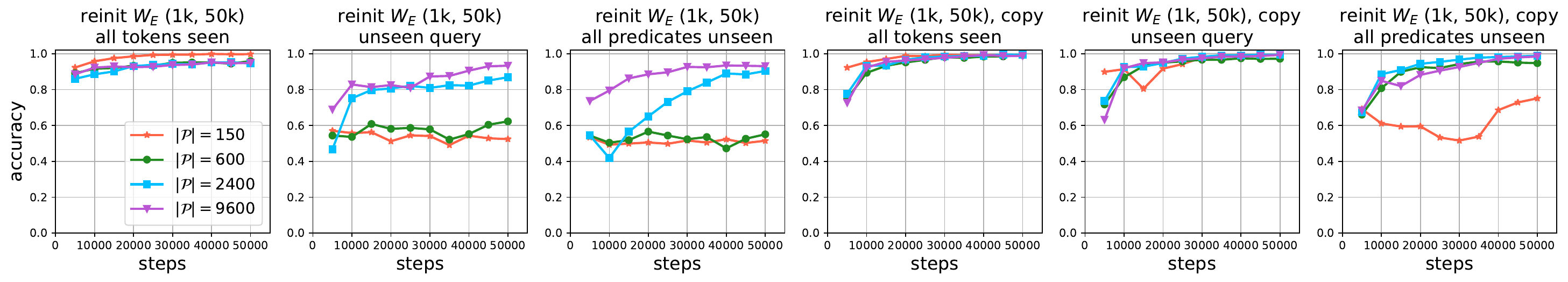}
    \caption{Propositional logic evaluation accuracy for models trained on single-token symbols with (temporary) active forgetting.}
    \label{fig:active_forgetting}
\end{figure}

\section{Examples of model samples}

Figure~\ref{fig:cot_single_token_vanilla} shows samples from the vanilla transformer trained on logic when the query (single-token) predicate is replaced by an unseen token. If the unseen token appears in rules, the model often replaces it by a seen token. If the unseen token appears in facts, the model often ignores it. Most model mistakes are false negatives. 

\begin{figure}[t!]
\small
\begin{subfigure}{.45\textwidth}
\begin{framed} 
{\bf Rules:} \newline
If enchanting
then UNK
EndIf \newline 
If perfect
then silly 
EndIf \newline 
[...] \newline
EndRules \newline
{\bf Facts:} enchanting
EndFacts \newline
{\bf Query:} UNK
EndQuery
? \newline
{\bf Reasoning:} \newline
Facts: enchanting
EndFacts \newline
If enchanting
then perfect
EndIf \newline
Newfact: perfect
EndNewfact \newline
Facts: enchanting perfect
EndFacts \newline
If enchanting
then ugliest
EndIf \newline
Newfact: ugliest
EndNewfact \newline
Facts: enchanting perfect ugliest
EndFacts \newline
If perfect
then silly
EndIf \newline
Newfact: silly 
EndNewfact \newline
Facts: enchanting perfect ugliest silly
EndFacts \newline
EndReasoning \newline
{\bf Answer:} no
EndAnswer
 \end{framed}
 \end{subfigure}
 \begin{subfigure}{.45\textwidth}
\begin{framed} 
{\bf Rules:} \newline [...] \newline
EndRules \newline 
{\bf Facts:} hilarious perfect bored proud UNK
EndFacts \newline 
{\bf Query:} UNK
EndQuery
? \newline
{\bf Reasoning:} \newline
Facts: hilarious perfect bored proud \newline
EndFacts
EndReasoning \newline
{\bf Answer:} no
EndAnswer
 \end{framed}
 \end{subfigure}
    \caption{Examples of reasoning traces in the case of a single single-token symbolic variable. The model refuses to generate the unseen token in its reasoning trace, leading to CoT errors and eventually the wrong answer. Most mistakes occur when the true answer is 'yes' and the model outputs 'no'. }
    \label{fig:cot_single_token_vanilla}
\end{figure}

\section{Effect of freezing embeddings and copy attention on language modeling}
\label{app:c4}
\begin{figure}[ht!]
    \centering
    \includegraphics[width=0.95\textwidth]{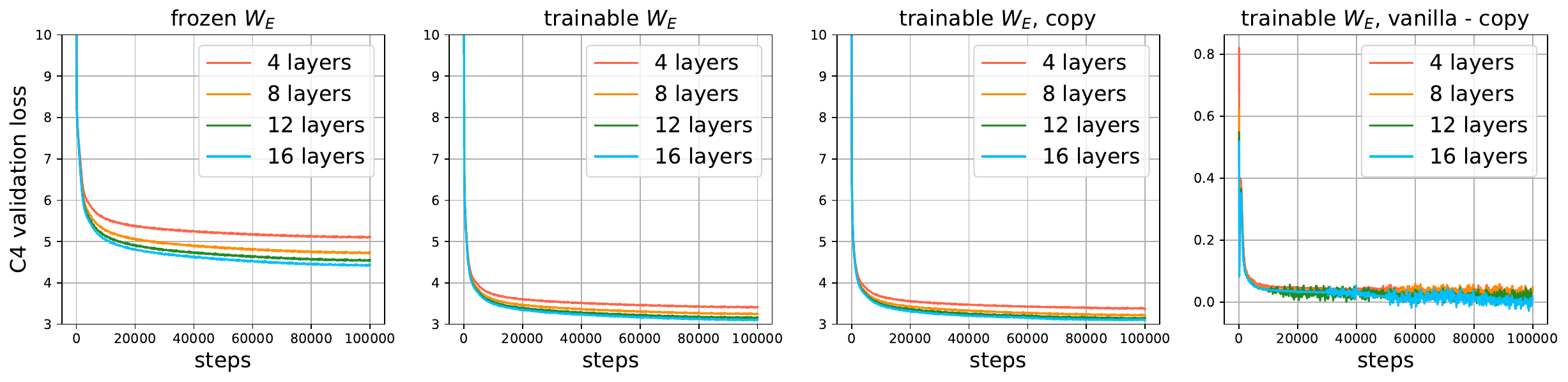}
    \caption{Evaluation loss on the C4 dataset with trainable and frozen embeddings and with copy attention. Freezing embeddings results in significantly higher validation loss compared to using trainable embeddings. Adding copy attention results in slightly smaller loss compared to the default architecture. To make the difference between the second and third figure visible, the last figure shows the difference in the validation loss of the second standard (vanilla) architecture and the one with the copy attention.}
    \label{fig:c4}
\end{figure}

In this section we train small transformers with 4, 8, 12, and 16 layers on the C4 dataset \citep{raffel2020exploring}, with trainable and frozen embeddings and with copy attention. We show the validation loss for all models in Figure~\ref{fig:c4}. Freezing embeddings leads to considerably worse evaluation loss and thus may not be a practical solution. Interestingly, copy attention has slightly better validation loss, especially early in the training and for small models.

\iftrue
\section{Multi-token symbols}
\label{app:multi}

\begin{figure}[h!]
\begin{framed} 
{\bf Rules:} If qwert, asdf then {\color{blue} abcd}e. If zxcv then {\color{blue} abcd}. 
{\bf Facts:} qwert, asdf. 
{\bf Query:} {\color{blue} abcd} ?
 \end{framed}
    \caption{Stylized example where the multi-token query shares a prefix with another predicate.}
    \label{fig:multitoken_prefix}
\end{figure}

\begin{figure}[ht!]
    \centering
   \includegraphics[width=0.9\textwidth]{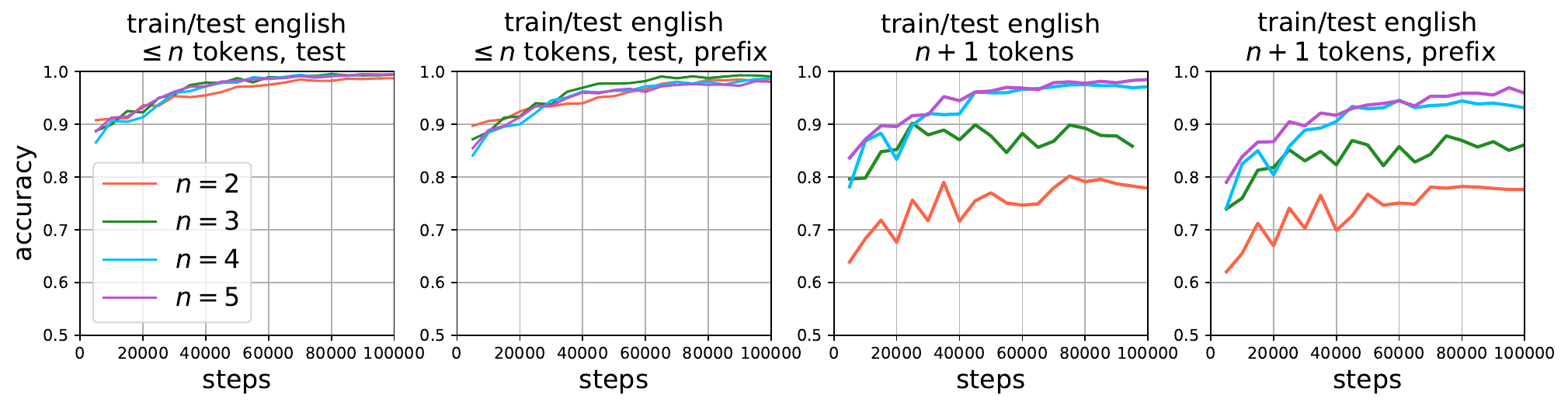}
    \caption{Evaluation of models trained on symbols of up to $n$ tokens on problems with symbols of length up to $n$, and length $n+1$. 'prefix' in figure title means that the query shares a prefix with another predicate, making them more easily confusable.
    }
    \label{fig:multi_token_unif}
\end{figure}

While our work focuses primarily on unseen tokens, here we also conduct a small preliminary study on unseen \emph{multi-token variables} (where all tokens are seen, but not all combinations are seen). This case has not been studied systematically in literature to the best of our knowledge, but it is related to much existing works on perturbing reasoning benchmarks by changing names and numerical values and using synonyms \citep{jiang2024peek,mirzadehgsm}. While embedding collapse is no longer an issue here, we observe other problems, including length generalization and difficulty in distinguishing similar symbols (such as those sharing a prefix), which we describe next.

To evaluate symbolic reasoning with multi-token symbols, we train models with symbols of length up to $n$ tokens for $n \in \{2, 3, 4, 5\}$. We select the number of tokens for each symbol in a problem uniformly at random. We use $b=26$ base tokens (corresponding to lowercase English alphabet letters). For each length 1, ..., $n$, we generate up to $m=1000$ symbols as follows.
Using a corpus of 10K most frequent English words \citep{10kwords}, we estimate the probability of the first letter and the next letter given the current letter, using counts regularized with a pseudo-count of 1 (i.e., the Laplace estimator). We use these to generate new random words (symbols). 

We split the symbols of each length evenly into train and test sets. We ensure that the train two-token symbols contain all tokens; thus, \emph{there are no unseen tokens or collapsed embeddings in this experiment}, eliminating the problems stemming from the undesired behavior of the embeddings. 
We evaluate models on the test symbols of length up to $n$ and on new symbols of length $n+1$. Additionally, to make evaluation more adversarial, we create examples where the query shares a prefix with the longest non-query predicate (e.g., 'abcde' and 'abcdf'), making the two more easily confusable. 

The results are shown in Figure~\ref{fig:multi_token_unif}. While models generalize well to problems involving test symbols of lengths seen during training, performance gets worse for symbols that are longer by even one token. By inspecting the samples containing errors, we observe the following qualitative behaviors. 
Models trained with $n=2$ and $n=3$ tend to truncate symbols to $n$ tokens; these errors also occur for higher $n$, but less frequently. This finding may be of practical interest, as typical math datasets involve short variable names that are unlikely to be tokenized to more than 3 tokens, and similar truncations have also been observed in large state-of-the-art models \citep{malek2025frontier}. 
For $n \geq 4$, accuracy is lower on $n+1$-token symbols when the query shares a prefix with another variable, suggesting that their representations start to collapse (this phenomenon was studied in more detail by \citet{barbero2024transformers}). 
Models evaluated on 5-token or 6-token symbols also occasionally \emph{hallucinate} -- go into a loop repeating a set of symbols. We speculate that this behavior is induced by long unlikely sequences of tokens. 

 Thus, while using multi-token variables and using the same symbol tokens at train and test time resolves some of the issues studied in our work (specifically, embedding collapse and generating unseen tokens), it also comes with its own set of issues requiring different interventions, and is an interesting direction for future work.


\fi

\section{Additional Gemma experiments}

\begin{figure}[ht!]
    \centering
    \includegraphics[width=0.95\textwidth]{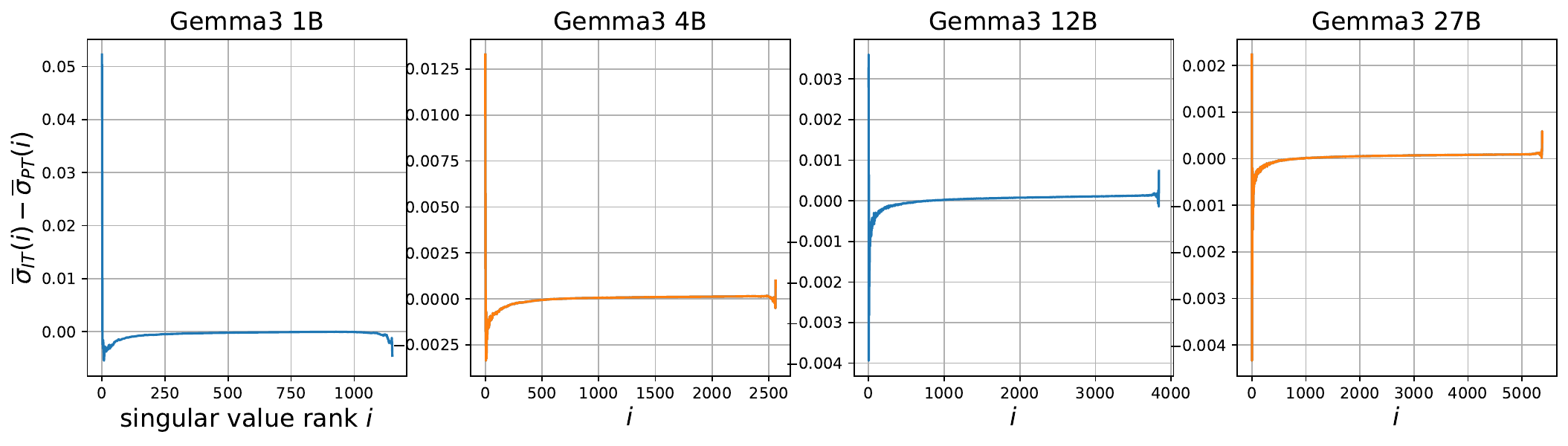}
    \caption{
    Difference between the sorted singular values of the Frobenius-normalized (un)embeddings of instruction-tuned (IT) and pre-trained (PT) Gemma~3 models. We observe that the 1B, 4B, and 12B IT models have larger top two singular values than the PT versions, suggesting some collapse during the IT process. The 27B model has a larger top singular value. The effect is generally weaker in larger models which have a larger embedding dimension.}
    \label{fig:spectra}
\end{figure}

\end{document}